\theoremstyle{definition}
\newtheorem{dfn}{Definition}[section]
\theoremstyle{plain}
\newtheorem{lem}{Lemma}[section]
\ifcvprfinal\pagestyle{empty}\fi
\begin{document}

\title{Iteratively Reweighted Graph Cut for Multi-label MRFs with Non-convex Priors}

\author{Thalaiyasingam Ajanthan, Richard Hartley, Mathieu Salzmann, and Hongdong Li\\[0.5em]
Australian National University and NICTA\thanks{NICTA is funded by the Australian Government as represented by the Department of Broadband, Communications and the Digital Economy and the Australian Research Council through the ICT Centre of Excellence program.}\\
Canberra, Australia
}

\maketitle

\begin{abstract}
While widely acknowledged as highly effective in computer vision, multi-label MRFs with non-convex priors are difficult to optimize. To tackle this, we introduce an algorithm that iteratively approximates the original energy with an appropriately weighted surrogate energy that is easier to minimize. Our algorithm guarantees that the original energy decreases at each iteration. In particular, we consider the scenario where the global minimizer of the weighted surrogate energy can be obtained by a multi-label graph cut algorithm, and show that our algorithm then lets us handle of large variety of non-convex priors. We demonstrate the benefits of our method over state-of-the-art MRF energy minimization techniques on stereo and inpainting problems.
\end{abstract}

\section{Introduction}
In this paper, we introduce an algorithm to minimize the energy of multi-label Markov Random Fields (MRFs) with non-convex edge priors. In general, minimizing a multi-label MRF energy function is NP-hard. While in rare cases a globally optimal solution can be obtained in polynomial time, \eg, in the presence of convex priors~\cite{ishikawa2003exact}, in most scenarios one has to rely on an approximate algorithm. Even though graph-cut-based algorithms~\cite{boykov2001fast} have proven successful for specific problems (\eg \textit{metric} priors), there does not seem to be a single algorithm that performs well with different non-convex priors such as the truncated quadratic, the Cauchy function and the corrupted Gaussian.

Here, we propose to fill this gap and introduce an iterative graph-cut-based algorithm to minimize multi-label MRF energies with a certain class of non-convex priors. Our algorithm iteratively minimizes a weighted surrogate energy function that is easier to optimize, with weights computed from the solution at the previous iteration. We show that, under suitable conditions on the non-convex priors, and as long as the weighted surrogate energy can be decreased, our approach guarantees that the true energy decreases at each iteration. 

More specifically, we consider MRF energies with arbitrary data terms and where the non-convex priors are concave functions of some convex priors over pairs of nodes. In this scenario, and when the label set is linearly ordered, the solution at each iteration of our algorithm can be obtained by applying the multi-label graph cut algorithm of~\cite{ishikawa2003exact}. Since the resulting solution is optimal, our algorithm guarantees that our MRF energy decreases. 

In fact, our method is inspired by the Iteratively Reweighted Least Squares (IRLS) algorithm which is well-known for continuous optimization. To the best of our knowledge, this is the first time that such a technique is transposed to the MRF optimization scenario.

We demonstrate the effectiveness of our algorithm on the problems of stereo correspondence estimation and image inpainting. Our experimental evaluation shows that our method consistently outperforms other state-of-the-art graph-cut-based algorithms~\cite{boykov2001fast, veksler2012multi}, and, in most scenarios, yields lower energy values than TRW-S \cite{kolmogorov2006convergent}, which was shown to be one of the best-performing multi-label approximate energy minimization methods~\cite{szeliski2008comparative,kappes2013comparative}.

\subsection{Related work}
Over the years, two different types of approximate MRF energy minimization methods have been proposed. The first class of such methods consists of move-making techniques that were inspired by the success of the graph cut algorithm at solving binary problems in computer vision. These techniques include $\alpha$-expansion, $\alpha$-$\beta$ swap~\cite{boykov2001fast} and multi-label moves~\cite{veksler2012multi, torr2009improved, jezierska2011fast}. The core idea of these methods is to reduce the original multi-label problem to a sequence of binary graph cut problems. Each graph cut problem can then be solved either optimally by the max-flow algorithm~\cite{boykov2004experimental} if the resulting binary energy is submodular, or approximately via a roof dual technique~\cite{boros2002pseudo} otherwise. The second type of approximate energy minimization methods consists of message passing algorithms, such as belief propagation (BP)~\cite{felzenszwalb2006efficient}, tree-reweighted message passing (TRW)~\cite{wainwright2005map,kolmogorov2006convergent} and the dual decomposition-based approach of~\cite{komodakis2011mrf}, which TRW is a special case of. 

As mentioned above, our algorithm is inspired by the IRLS method. Recently, several methods similarly motivated by the IRLS have been proposed to minimize different objective functions. For instance, in~\cite{aftabgeneralized}, the $L_q$ norm (for $1 \le q < 2$) was minimized by iteratively minimizing a weighted $L_2$ cost function. In~\cite{ochs2013iterated}, an iterated $L_1$ algorithm was introduced to optimize non-convex functions that are the sum of convex data terms and concave smoothness terms. More recently, a general formulation (not restricted to weighted $L_2$ or $L_1$ minimization) was studied together with the conditions under which such iteratively reweighted algorithms ensure the cost to decrease~\cite{aftabwacv15}. In the next section, we propose an extension of this formulation that will later allow us to tackle the case of multi-label MRFs.

\section{Iteratively reweighted minimization}\label{sec:algo}

Given a set $\mathcal{X}$ and functions $f_{i}: \mathcal{X} \rightarrow \mathcal{D}$ and $h_i: \mathcal{D} \rightarrow {\rm I\!R}$, where $\mathcal{D} \subseteq {\rm I\!R}$, let us assume that we want to minimize an objective function of the form
\begin{equation}
C_h(\mathbf{x}) = \sum_{i=1}^k h_i \circ f_{i}(\mathbf{x})\ ,
\label{eqn:hcost}
\end{equation}
and, without loss of generality, that we have a method to minimize a weighted cost function of the form
\begin{equation}
C_w(\mathbf{x}) = \sum_{i=1}^k w_{i}\, f_{i}(\mathbf{x})\ .
\label{eqn:wcost}
\end{equation}
For instance, in the IRLS algorithm, $f_i(\mathbf{x})$ is a squared cost.

Our goal is to study the conditions under which $C_h$ can be minimized by iteratively minimizing $C_w$.
To this end, we first give the definition of a supergradient, which we will rely upon in the following discussion.

\begin{dfn} 
Let $\mathcal{D}$ be a subset of ${\rm I\!R}$. A supergradient of a function $h: \mathcal{D} \rightarrow {\rm I\!R}$ at a point $c$ is a value $h^s(c) \in {\rm I\!R}$ such that $h(d) \le h(c) + (d-c)\, h^s(c)$ for any point $d$.
\end{dfn}
A supergradient $h^s$ is called a strict supergradient if the inequality is strict for any point $d \ne c$. If the function is differentiable, then the supergradient at a point is unique and equal to the derivative. A concave function defined on a subset of the real numbers has a supergradient at each interior point.

In~\cite{aftabwacv15}, the following lemma was provided to study the behavior of iteratively reweighted minimization methods.
\begin{lem}
Let $h(x)$ be a concave function defined on a subset $\mathcal{D}$ of the real numbers and $h^s(c_i)$ be a supergradient at $c_i$. If $c_i$ and $d_i$ in $\mathcal{D}$ satisfy 
$$\sum_{i=1}^k d_i\, h^s(c_i) \le \sum_{i=1}^k c_i \,h^s(c_i)\ ,$$ 
then 
$$\sum_{i=1}^k h(d_i) \le \sum_{i=1}^k h(c_i)\ .$$
If the first inequality is strict, so is the second.
\label{lem:conv}
\end{lem}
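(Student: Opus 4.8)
The plan is to reduce everything to the defining inequality of a supergradient, applied one index at a time and then summed. By Definition, for each $i$ the supergradient $h^s(c_i)$ satisfies $h(d_i) \le h(c_i) + (d_i - c_i)\, h^s(c_i)$, since $d_i$ lies in $\mathcal{D}$ and is therefore an admissible test point. This termwise bound is the only property of $h$ I intend to use; concavity enters solely through the fact that it guarantees such a supergradient exists at each $c_i$, which the hypothesis already grants me.

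Next I would add these $k$ inequalities together. Summing the bound over $i = 1, \dots, k$ yields
$$\sum_{i=1}^k h(d_i) \le \sum_{i=1}^k h(c_i) + \sum_{i=1}^k (d_i - c_i)\, h^s(c_i)\ .$$
The cross term then splits as $\sum_{i=1}^k (d_i - c_i)\, h^s(c_i) = \sum_{i=1}^k d_i\, h^s(c_i) - \sum_{i=1}^k c_i\, h^s(c_i)$, which is exactly the quantity the hypothesis controls. Under the assumption $\sum_i d_i\, h^s(c_i) \le \sum_i c_i\, h^s(c_i)$, this difference is nonpositive, so the cross term can be dropped from the right-hand side, leaving $\sum_i h(d_i) \le \sum_i h(c_i)$, as claimed.

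For the strict version, I would observe that if the hypothesis holds strictly, then the cross term is \emph{strictly} negative, so the summed bound reads $\sum_i h(d_i) \le \sum_i h(c_i) + (\text{something} < 0)$, forcing a strict conclusion. The only point that deserves care here is that the individual supergradient inequalities are \emph{not} assumed strict (we are using an ordinary, not a strict, supergradient), so strictness of the final inequality cannot be attributed to any single term; it must be inherited entirely from the strict gap in the hypothesis. That is the subtlety worth flagging, but it causes no real difficulty, since a nonpositive sum plus a strictly negative sum is strictly negative regardless. Beyond this, the argument is essentially a one-line linearization, and I do not anticipate any genuine obstacle.
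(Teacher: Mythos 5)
Your proof is correct. The paper itself gives no argument for this lemma (its ``proof'' is just a citation to~\cite{aftabwacv15}), but your argument --- apply the termwise supergradient inequality, sum over $i$, and use the hypothesis to make the cross term $\sum_i (d_i-c_i)\,h^s(c_i)$ non-positive (strictly negative in the strict case) --- is precisely the argument the paper uses to prove its own generalization, Lemma~\ref{lem:dec}, where $h$ is allowed to depend on $i$; so your approach is essentially the paper's.
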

\begin{proof}
See ~\cite{aftabwacv15}.
\end{proof}

Note that Lemma~\ref{lem:conv} only considers the case where the function $h$ is the same for all the elements in the sum. This is in contrast with our definition of the cost in Eq.~\ref{eqn:hcost}, where we want to allow $h$ to be indexed on $i$. To handle this more general scenario, we introduce the following lemma.
\begin{lem}
Given a set $\mathcal{X}$ and functions $f_{i}: \mathcal{X} \rightarrow \mathcal{D}$ and concave functions $h_i: \mathcal{D} \rightarrow {\rm I\!R}$, where $\mathcal{D} \subseteq {\rm I\!R}$, such that,
$$
\sum_{i=1}^k w_{i}^{t}\, f_{i}(\mathbf{x}^{t+1}) \le \sum_{i=1}^k w_{i}^{t}\, f_{i}(\mathbf{x}^{t})\ ,
$$
where $w_i^t = h_i^s(f_{i}(\mathbf{x}^t))$, and $\mathbf{x}^t$ is the estimate of $\mathbf{x}$ at iteration $t$. Then,
$$
\sum_{i=1}^k h_i \circ f_{i}(\mathbf{x}^{t+1}) \le \sum_{i=1}^k h_i \circ f_{i}(\mathbf{x}^{t})\ .
$$
If the first inequality is strict, so is the second.
\label{lem:dec}
\end{lem}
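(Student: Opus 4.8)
The plan is to reduce the statement to a term-by-term application of the supergradient bound and then to sum. The key observation is that the weights $w_i^t = h_i^s(f_i(\mathbf{x}^t))$ are chosen to be exactly the supergradients of each $h_i$ at the current iterate, so each concave $h_i$ is controlled from above by an affine function whose slope is precisely $w_i^t$. This is the natural generalization of Lemma~\ref{lem:conv}, the only new feature being that the function $h_i$ is allowed to differ across the $k$ terms; since the bound is applied separately to each summand, indexing $h$ on $i$ costs nothing.

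First I would fix $i$ and abbreviate $c_i = f_i(\mathbf{x}^t)$ and $d_i = f_i(\mathbf{x}^{t+1})$. Because $h_i$ is concave, its supergradient $w_i^t = h_i^s(c_i)$ exists at interior points of $\mathcal{D}$, and the defining inequality of a supergradient, evaluated at $d = d_i$, gives
$$h_i(d_i) \le h_i(c_i) + (d_i - c_i)\, w_i^t\ .$$
I would record that this holds for every $i$ with its own function $h_i$, which is exactly the case Lemma~\ref{lem:conv} did not cover.

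Next I would sum these $k$ inequalities over $i$, obtaining
$$\sum_{i=1}^k h_i(d_i) \le \sum_{i=1}^k h_i(c_i) + \sum_{i=1}^k w_i^t\, (d_i - c_i)\ .$$
Substituting back $c_i = f_i(\mathbf{x}^t)$ and $d_i = f_i(\mathbf{x}^{t+1})$ and splitting the last sum, the residual term becomes $\sum_{i=1}^k w_i^t f_i(\mathbf{x}^{t+1}) - \sum_{i=1}^k w_i^t f_i(\mathbf{x}^t)$, which is precisely the quantity the hypothesis bounds. By assumption this residual is nonpositive, so the two surviving sums yield $\sum_i h_i \circ f_i(\mathbf{x}^{t+1}) \le \sum_i h_i \circ f_i(\mathbf{x}^t)$, as claimed. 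For the strict statement, if the hypothesis holds strictly the residual is strictly negative and the conclusion inherits the strict inequality directly.

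The argument is short, and the main subtlety—rather than a genuine obstacle—is the existence of the supergradient $h_i^s(f_i(\mathbf{x}^t))$. As noted just before the statement, concavity guarantees a supergradient only at \emph{interior} points of $\mathcal{D}$, so one should assume (or check for the applications at hand) that each $f_i(\mathbf{x}^t)$ lies in the interior of $\mathcal{D}$, since otherwise the weight $w_i^t$ need not be well defined. Everything else is the routine term-by-term supergradient bound followed by summation and a single use of the hypothesis.
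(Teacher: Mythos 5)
Your proof is correct and follows essentially the same route as the paper's: define $c_i = f_i(\mathbf{x}^t)$ and $d_i = f_i(\mathbf{x}^{t+1})$, apply the supergradient inequality to each term, sum over $i$, and observe that the residual term is exactly the quantity bounded by the hypothesis. Your additional remark about supergradients only being guaranteed at interior points of $\mathcal{D}$ is a reasonable caveat (the paper states this fact before the lemma but does not revisit it in the proof), but it does not change the argument.
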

\begin{proof}
Let us define $c_i = f_{i}(\mathbf{x}^t)$ and $d_i = f_{i}(\mathbf{x}^{t+1})$. Since $h_i^s$ is a supergradient, 
$$
h_i(d_i) \le h_i(c_i) + (d_i - c_i) \,h_i^s(c_i)\ ,
$$
for all $i$. Summing over $i$ gives, 
$$
\sum_{i=1}^k h_i(d_i) \le \sum_{i=1}^k h_i(c_i) + \sum_{i=1}^k (d_i - c_i)\, h_i^s(c_i)\ .
$$ 
The sum $\sum_{i=1}^k (d_i - c_i) \,h^s(c_i) = \sum_{i=1}^k w_{i}^{t}\, f_{i}(\mathbf{x}^{t+1}) - \sum_{i=1}^k w_{i}^{t}\, f_{i}(\mathbf{x}^{t})$ is non-positive by hypothesis, which completes the proof.
\end{proof}
It is important to note that this lemma holds for discrete subsets $\mathcal{D}$, as well as continuous ones, and that the functions $h_i$ do not need to be differentiable. 

Therefore, for concave functions $h_i$, by choosing the supergradients of $h_i$ as weights at each iteration, we can minimize the objective function $C_h(\mathbf{x})$ of Eq.~\ref{eqn:hcost} by iteratively minimizing the cost $C_w(\mathbf{x})$ of Eq.~\ref{eqn:wcost}. This general procedure is summarized in Algorithm~\ref{alg:ir}.

\begin{algorithm}[t!]
\caption{Iteratively reweighted minimization}
\label{alg:ir}
\begin{algorithmic}
\State $C_h(\mathbf{x}) \gets \sum_{i=1}^k h_i \circ f_i(\mathbf{x})$ \Comment {Concave functions $h_i$}
\State Initialize $\mathbf{x}$
\Repeat 
\State $w_i^t \gets h_i^s(f_{i}(\mathbf{x}^t))$
\State $\mathbf{x}^{t+1} \gets \underset{\mathbf{x}} {\operatorname{arg\,min}} \sum_{i=1}^k w_i^t\, f_i(\mathbf{x})$
\Until convergence of $C_h(\mathbf{x})$\\
\Return $\mathbf{x}^{t+1}$
\end{algorithmic}
\end{algorithm}

Algorithm \ref{alg:ir} is applicable to any minimization problem, as long as the objective function takes the form of Eq.~\ref{eqn:hcost} with concave functions $h_i$. Furthermore, to minimize the surrogate cost of  Eq.~\ref{eqn:wcost}, any algorithm (either exact or approximate) can be used, as long as it decreases this cost.

\section{An iteratively reweighted scheme for MRFs}
\label{sec:irmrf}

Recall that our goal is to tackle the problem of MRF energy minimization. Here, we show how this can be achieved 
by exploiting Algorithm~\ref{alg:ir}.

To this end, let $\mathcal{V}$ be the set of vertices (or nodes) of an MRF, \eg, corresponding to the pixels or superpixels in an image, and $\mathcal{L}$ be a finite set of labels. Furthermore, let $\mathbf{x} = [x_1, \cdots, x_n]^T \;, \; n = |\mathcal{V}|\;,\; x_p \in \mathcal{L}$ denote the vector that assigns one label to each node. Finding the best label assignment for each node in an MRF then translates to minimizing the energy of the MRF.
In general, the energy of an MRF can be expressed as
\begin{equation}
E(\mathbf{x}) = \sum_{i=1}^{|\mathcal{C}|} \theta_{i}(\mathbf{x}_i)\ ,
\label{eqn:tmrf}
\end{equation}
where $\mathcal{C}$ is the set of cliques in the graph (i.e., the groups of connected nodes), $\mathbf{x}_i$ represents the set of variables corresponding to the nodes in clique $i$, and $\theta_{i} : \mathcal{L}^{|\mathbf{x}_i|} \rightarrow {\rm I\!R}$ is the energy (or potential) function associated with clique $i$.

Let us now assume that each potential function can be written as
\begin{equation}
\theta_i(\mathbf{x}_i) = h_i \circ f_i(\mathbf{x}_i)\ ,
\label{eqn:irpot}
\end{equation}
where $h_i$ is a concave function and $f_i$ an arbitrary one. This lets us rewrite the MRF energy of Eq.~\ref{eqn:tmrf} as
\begin{equation}
E(\mathbf{x}) = \sum_{i=1}^{|\mathcal{C}|} h_i \circ f_i(\mathbf{x}_i)\ ,
\label{eqn:hmrf}
\end{equation}
which has the form of Eq.~\ref{eqn:hcost}\footnote{Note that $f_i(\mathbf{x}_i)$ can be equivalently written as $f_i(\mathbf{x})$, where the variables $x_p \notin \mathbf{x}_i$ (i.e., not in clique $i$) simply have no effect on the function.}. Therefore, we can employ Algorithm \ref{alg:ir} to minimize $E(\mathbf{x})$, and iteratively minimize the surrogate energy
\begin{equation}
\tilde{E}(\mathbf{x}) = \sum_{i=1}^{|\mathcal{C}|} w_i f_i(\mathbf{x}_i)\ ,
\label{eqn:wmrf}
\end{equation}
with weight $w_i$ taken as the supergradient of $h_i$ evaluated at the previous estimate of $\mathbf{x}$.

It is important to note, however, that for this algorithm to be effective, the minimization of $\tilde{E}(\mathbf{x})$ at each iteration must be relatively easy, and at least guarantee that the surrogate energy decreases. Furthermore, while in practice any existing MRF energy minimization algorithm (either exact or approximate) can be utilized to minimize $\tilde{E}(\mathbf{x})$, the quality of the overall solution found by our algorithm may vary accordingly. In the next section, we discuss a special case of this general MRF energy minimization algorithm, which, as depicted in our experiments, is effective in many scenarios.

\begin{algorithm*}[t]
\caption{Iteratively Reweighted Graph Cut (IRGC)}
\label{alg:irg}
\begin{algorithmic}
\State $E(\mathbf{x}) \gets \sum_{p \in \mathcal{V}} f_p(x_p) + \sum_{(p,q) \in \mathcal{N}} h_b \circ g\left(\left|x_p - x_q\right|\right)$ \Comment{Convex function $g$ and non-decreasing concave function $h_b$}
\State $w_{pq}^0 \gets 0.5$ \Comment{Initialize the weights}
\Repeat 
\State \textbf{if} $t \neq 0$ \textbf{then} $w_{pq}^{t} \gets h^s_b\left(g\left(\left|x_p^t - x_q^t\right|\right)\right)$ \textbf{end if} \Comment{Update the weights except at the first iteration}
\State $\mathbf{x}^{t+1} \gets \underset{\mathbf{x}} {\operatorname{arg\,min}} \sum_{p \in \mathcal{V}} f_p(x_p) + \sum_{(p,q) \in \mathcal{N}} w_{pq}^t\, g(|x_p - x_q|)$ \Comment{Minimize using the multi-label graph cut}
\Until {$E(\mathbf{x}^{t+1}) = E(\mathbf{x}^t)$} \Comment{Convergence of $E(\mathbf{x})$}\\
\Return $\mathbf{x}^{t+1}$
\end{algorithmic}
\end{algorithm*}

\section{Iteratively reweighted graph cut}

In this section, we introduce an iterative algorithm for the case of multi-label MRFs with pairwise node interactions. In particular, we propose to make use of the multi-label graph cut of~\cite{ishikawa2003exact} at each iteration of our algorithm. The multi-label graph cut yields an optimal solution under the following two conditions: 1) the label set must be ordered; 2) the pairwise potential must be a convex function of the label difference. In practice, such convex priors have limited power due to their poor ability to model noise. In contrast, while still relying on the first condition, our algorithm allows us to generalize to non-convex priors, and in particular to robust norms that have proven effective in computer vision.
 
\subsection{MRF with pairwise interactions}
\label{sec:pair}
In an MRF with pairwise node interactions, the energy can be expressed as
\begin{equation}
E(\mathbf{x}) = \sum_{p \in \mathcal{V}} \theta^u_p(x_p) + \sum_{(p, q) \in \mathcal{N}} \theta^b_{pq}(x_p, x_q)\ ,
\label{eqn:mrfp}
\end{equation}
where $\theta^u$ and $\theta^b$ denote the unary potentials (i.e., data cost) and pairwise potential (i.e., interaction cost), respectively, and $\mathcal{N}$ is the set of edges in the graph, \eg, encoding a 4-connected or 8-connected grid over the image pixels.

As discussed in Section~\ref{sec:irmrf}, to be able to make use of Algorithm~\ref{alg:ir}, we need to have potential functions of the form given in Eq.~\ref{eqn:irpot}. Under this assumption, we can then rewrite the energy of Eq.~\ref{eqn:mrfp} as
\begin{equation}
E(\mathbf{x}) =\sum_{p \in \mathcal{V}} h_u \circ f_p(x_p) + \sum_{(p, q) \in \mathcal{N}} h_b \circ f_{pq}(x_p,x_q)\ ,
\label{eqn:hmrf1}
\end{equation}
where $h_u$ and $h_b$ are concave functions.

Following Algorithm~\ref{alg:ir}, we minimize this energy by iteratively minimizing a surrogate energy of the form (at iteration $t+1$)
\begin{eqnarray}
\tilde{E}(\mathbf{x}) &=& \sum_{p \in \mathcal{V}} h^s_u\hspace{-0.1cm}\left(f_p(x_p^t)\right) f_p(x_p)\\ &+& \sum_{(p, q) \in \mathcal{N}} h^s_b\hspace{-0.1cm}\left(f_{pq}(x^t_p,x^t_q)\right) f_{pq}(x_p,x_q)\ , \nonumber
\label{eqn:wmrf1}
\end{eqnarray}
where $h^s_u$ and $h^s_b$ are the supergradients of $h_u$ and $h_b$, respectively, and $x_p^t$ denote the estimate of $x_p$ at the previous iteration.

Since our goal is to employ the multi-label graph cut algorithm~\cite{ishikawa2003exact} to minimize $\tilde{E}(\mathbf{x})$, we need to define the different functions $h_u$, $h_b$, $f_p$ and $f_{pq}$ so as to satisfy the requirements of this algorithm. To this end, for the unary potential, we choose $h_u$ to be the identity function. That is,
\begin{equation}
\theta^u_p(x_p) = h_u \circ f_p(x_p) = f_p(x_p)\ .
\end{equation}
This implies that no reweighting is required for the unary potentials, since the supergradient of $h_u$ is always 1. The multi-label graph cut having no specific requirement on the form of the data term, $f_p$ can be any arbitrary function.

In contrast, for the pairwise potentials, the multi-label graph cut requires $f_{pq}$ to be a convex function of the label difference. That is, for a convex function $g$ defined on a subset of ${\rm I\!R}$,
\begin{equation}
f_{pq}(x_p,x_q) = g(|x_p - x_q|)\ .
\label{eqn:ish0}
\end{equation}
In addition, because the energy $\tilde{E}(\mathbf{x})$ depends on a {\it weighted} sum of pairwise terms, we need the weights to satisfy some conditions. More precisely, to be able to use the max-flow algorithm within the multi-label graph cut, the weights need to be non-negative. Since these weights are computed from the supergradient of $h_b$, this translates into a requirement for $h_b$ to be non-decreasing. Note that, in the context of smoothness potentials in an MRF, this requirement comes at virtually no cost, since we hardly expect the potentials to decrease as the label difference increases.

Under these conditions, the surrogate energy to be minimized by the multi-label graph cut at each iteration of our algorithm can be written as
\begin{equation}
\tilde{E}({\mathbf{x}}) = \sum_{p \in \mathcal{V}} f_p(x_p) + \sum_{(p,q) \in \mathcal{N}} w^t_{pq} g(|x_p - x_q|)\ ,
\label{eqn:wmrf1}
\end{equation}
where $g$ is a convex function, and $w^t_{pq} = h^s_b\hspace{-0.1cm}\left(f_{pq}(x^t_p,x^t_q)\right)$, with $h_b$ a concave, non-decreasing function. In the first iteration, we set the weights $w_i^0$ to some constant value\footnote{We set $w_{pq}^0 = \epsilon$, where $0 \le \epsilon \le 1$, so that the effect of the edge terms is smaller for the first estimate. In our experiments, we found $\epsilon=0.5$ to work well and thus always use this value.} to make the algorithm independent of any initial estimate $\mathbf{x}^0$. Our overall Iteratively Reweighted Graph Cut (IRGC) algorithm is summarized in Algorithm \ref{alg:irg}. 

\paragraph{Multi-label graph cut.}
Here, we briefly describe the multi-label graph cut algorithm employed at each iteration of Algorithm \ref{alg:irg}. Note that, although equivalent, our graph construction slightly differs from that of~\cite{ishikawa2003exact}.


The multi-label graph cut works by constructing a graph such as the one illustrated in Fig.~\ref{fig:ishikawa} for two neighboring nodes $p$ and $q$ of the original MRF, and applying graph cut to this graph. More specifically, 
let the label set $\mathcal{L} = \{0, 1, 2, \cdots, l-1\}$. For each node $p \in \mathcal{V}$, the new graph contains $l-1$ nodes denoted by $p_0, p_1, \cdots, p_{l-2}$. Furthermore, two additional nodes, the start and end nodes denoted by $0$ and $1$, are included in the graph.  

\begin{figure}[t]
\begin{center}
\includegraphics[width=0.9\linewidth, trim=2.9cm 7.9cm 9.1cm 4.7cm, clip=true, page=2]{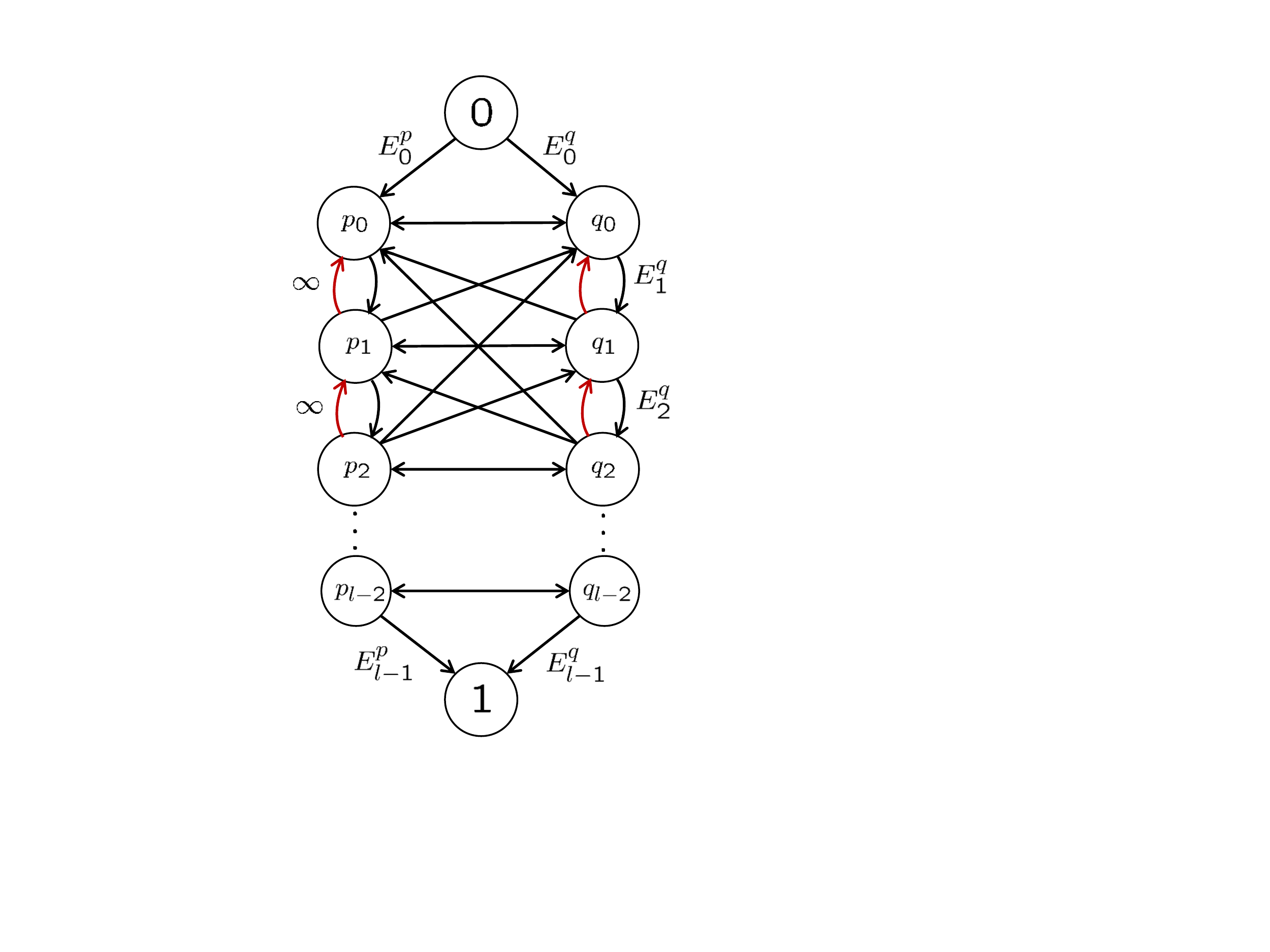}	
\end{center}
\vspace{-0.4cm}
\caption{\em Multi-label graph construction. Constraint edges are highlighted in red.}
\label{fig:ishikawa}
\vspace{-0.2cm}
\end{figure}

For each node $p$ and each $i \in {L}$, there is a directed horizontal edge from $p_{i-1}$ to $p_{i}$, where $p_{-1}$ and $p_{l-1}$ denote the start and end nodes, respectively. Additionally, the graph contains constraint edges with infinite weights going in the opposite direction. Since the graph cut algorithm partitions the set of nodes into two disjoint subsets $\mathcal{V}_0$ and $\mathcal{V}_1$ where $0 \in \mathcal{V}_0$ and $1 \in \mathcal{V}_1$, the constraint edges ensure that for each $p \in \mathcal{V}$ exactly one horizontal edge is present in the minimum cut. If an edge $p_{i-1} \rightarrow p_{i}$ is in the minimum cut, then it can be interpreted as node $p$ taking the label $i$. Therefore, the horizontal edges directly represent the unary potentials and are given a weight $E_i^p = f_p(i)$.

Furthermore, for any $(p,q) \in \mathcal{N}$ and for each $i,j \in \{0, 1, \cdots, l-2\}$ there is a directed edge from $p_i$ to $q_j$ and vice versa. The weight of the edge $p_i \rightarrow q_j$ is defined as
\begin{equation}
E_{ij}^{pq} = \left\{ \begin{array}{lll}
	0 & \mbox{if $i < j$} \\
           \frac{w_{pq}^t}{2}\, g''(|i-j|) & \mbox{if $i=j$} \\
	 w_{pq}^t\,g''(|i-j|) & \mbox{if $i>j$} \ , \end{array} \right.
\label{eqn:ishbew}
\end{equation}
where $g''(|z|) = g(|z+1|) + g(|z-1|) - 2\,g(|z|)$, which is positive for a convex function $g$.

In our scenario, with our condition that $w_{pq}^t$ be non-negative, the multi-label graph contains no negative edges. Therefore, the global minimum of the corresponding energy can be found in polynomial time using the max-flow algorithm. Note that for a sparsely connected graph, \eg, 4 or 8-connected neighbourhood, the memory requirement of a general multi-label graph is $\mathcal{O}(|\mathcal{V}|\cdot|\mathcal{L}|^2)$. However if $g$ is linear, then only the vertical edges $p_i \rightarrow q_i$ will have non-zero weights and the memory required drops to $\mathcal{O}(|\mathcal{V}|\cdot|\mathcal{L}|)$.

\subsection{Choice of functions $g$ and $h_b$}

While, in Section~\ref{sec:pair}, we have defined conditions on the functions $g$ and $h_b$ (i.e., $g$ convex and $h_b$ concave, non-decreasing) for our algorithm to be applicable with the multi-label graph cut, these conditions still leave us a lot of freedom in the actual choice of these functions. Here, we discuss several such choices, with special considerations on the memory requirement of the resulting algorithm.

In the context of computer vision problems with ordered label sets, \eg, stereo and inpainting, it is often important to make use of robust estimators as pairwise potentials to better account for discontinuities, or outliers. Many such robust estimators belong to the family of functions with a single inflection point in ${\rm I\!R}^+$. In other words, they can be generally defined as non-decreasing functions $\theta_{pq}(|x_p - x_q|)$\footnote{Note that, here, since we only consider pairwise terms, we dropped the superscript $b$ in $\theta^b$ for ease of notation.}, such that for a given $\lambda \ge 0$, $\theta(z)$ is convex if $z \le \lambda$, and concave otherwise. Such functions include the truncated linear, the truncated quadratic and the Cauchy function $\theta(z) = \lambda^2/2\,\log\left(1 + (z/\lambda)^2\right)$~\cite{hartley2003multiple}. Note that any convex, or concave function on ${\rm I\!R}^+$ also belongs to this family.

\begin{figure*}[t]
\begin{center}
\begin{subfigure}{.33\textwidth}
	\includegraphics[width=0.9\linewidth]{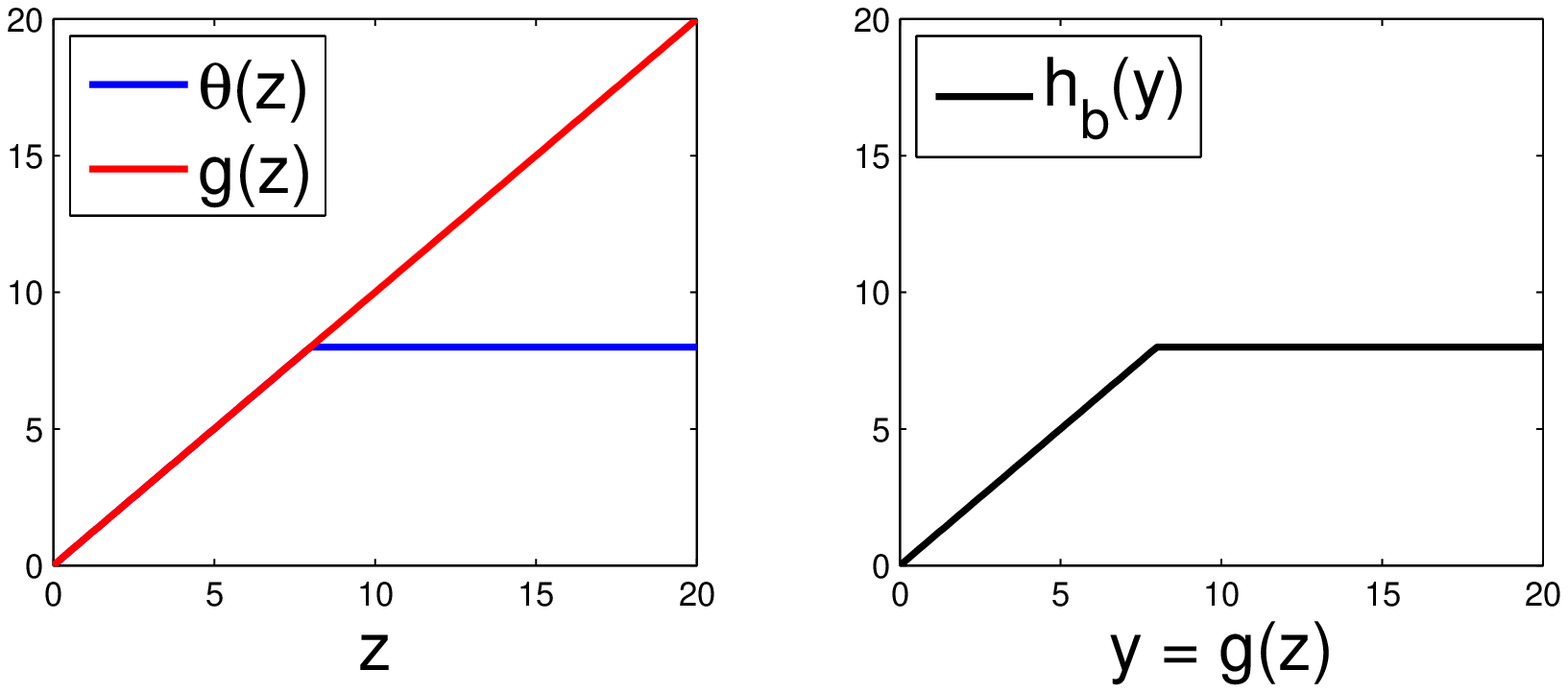}
	\caption{$\theta$ - Truncated linear}
	\label{sfig:ghtl1}
\end{subfigure}%
\begin{subfigure}{.33\textwidth}
	\includegraphics[width=0.9\linewidth]{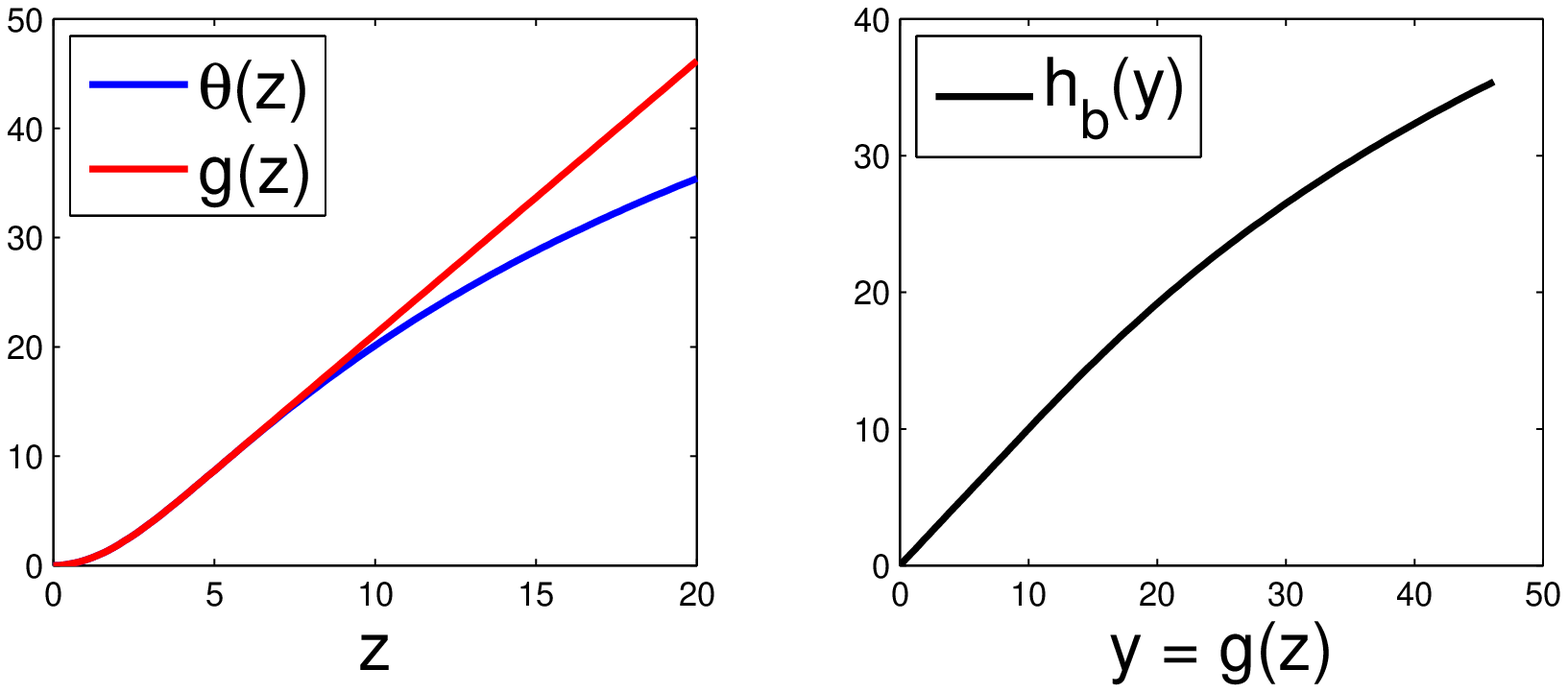}
	\caption{$\theta$ - Cauchy function}
	\label{sfig:ghcau}
\end{subfigure}%
\begin{subfigure}{.33\textwidth}
	\includegraphics[width=0.9\linewidth]{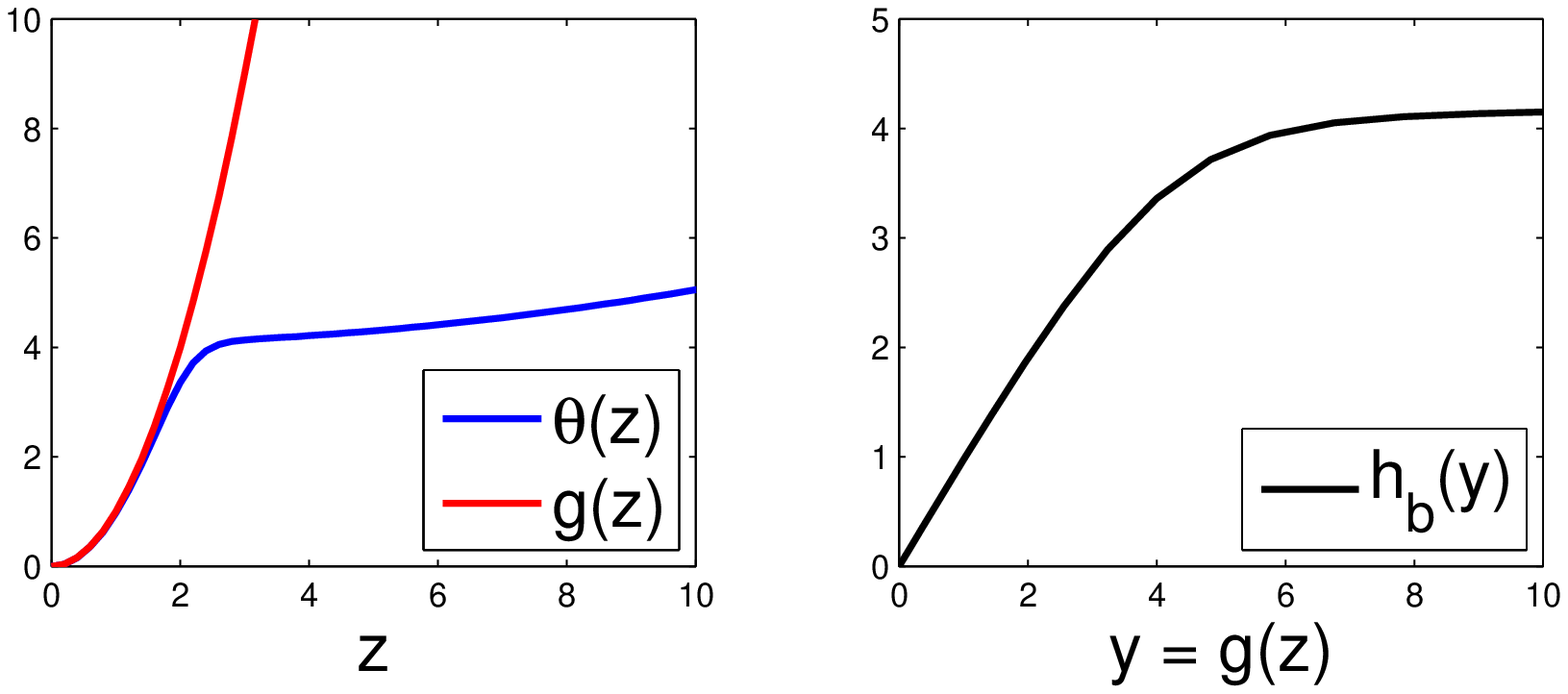}
	\caption{$\theta$ - Corrupted Gaussian}
	\label{sfig:ghgau}
\end{subfigure}
\end{center}
\vspace{-0.4cm}
 \caption{\em Plots of $\theta$, $g$ and $h_b$ with $\theta(z) = h_b \circ g(z)$, when $\theta$ is \textbf{(a)} the truncated linear, \textbf{(b)} the Cauchy function and \textbf{(c)} the corrupted Gaussian. Here $g$ is convex and $h_b$ is concave. In \textbf{(a)} and \textbf{(b)}, the functions $g$ and $h_b$ are derived from Table \ref{tab:gh}. In \textbf{(c)}, $g(z) = z^2$ and $h_b(y) = \theta(\sqrt{y})$.}
\label{fig:gh}
\vspace{-0.2cm}
\end{figure*}

\begin{table}[t]
\begin{center}
\begin{tabular}{>{\centering\arraybackslash}m{0.86cm}|>{\centering\arraybackslash}m{1.9cm}||>{\centering\arraybackslash}m{1.31cm}|>{\centering\arraybackslash}m{2.4cm}}
 & $y=g(z)$ & & $h_b(y)$ \\
\hline
$z \le \lambda$ & $\theta(z)$ & $y \le \theta(\lambda)$ & $y$\\
\hline
$z \ge \lambda$ & $\begin{array}{c} \theta'(\lambda)(z-\lambda) \\ + \theta(\lambda) \end{array}$ & $y \ge \theta(\lambda)$ & $\theta\left(\frac{y+\lambda\,\theta'(\lambda)-\theta(\lambda)}{\theta'(\lambda)}\right)$\\
\end{tabular}
\end{center}
\vspace{-0.4cm}
\caption{\em Functions $g$ and $h_b$ with $\theta(z) = h_b \circ g(z)$, where $\theta(z)$ is convex if $z \le \lambda$ and concave otherwise. It can easily be verified that $g$ is convex and $h_b$ is concave, as well as that both functions are non-decreasing, because $\theta$ is non-decreasing. Here $\theta'(\lambda)$ is the derivative of $\theta$ at $\lambda$, or its left derivative $\theta'(\lambda^-)$ if $\theta$ is not differentiable at $\lambda$. See Fig.~\ref{fig:gh}(a-b) for example plots.}
\label{tab:gh}
\vspace{-0.2cm}
\end{table}

For a given such function $\theta$, according to our algorithm, we need to write
\begin{equation}
\theta(z) = h_b \circ g(z)\ ,
\label{eq:phidecomp}
\end{equation}
with a concave function $h_b$ and a convex function $g$. Note that the multi-label graph structure is determined by the function $g$. Therefore, to make the graph as sparse as possible, and thus limit the required memory, we need to choose $g$ such that the second order difference $g''(z)$ is zero for as many values $z$ as possible. Table \ref{tab:gh} gives the functions $g$ and $h_b$ such that $g''(z)$ is zero $\forall z \ge \lambda$ while satisfying Eq.~\ref{eq:phidecomp} and the necessary conditions on $h_b$ and $g$. Fig.~\ref{fig:gh}(a-b) provide the plots corresponding to the truncated linear and Cauchy function.
For a function $g$ derived according to Table \ref{tab:gh}, the memory requirement of the multi-label graph is $\mathcal{O}(|\mathcal{V}|\cdot|\mathcal{L}|\cdot\lambda)$. 

Note that our method is not limited to the family of functions described above. As an example, we consider the case of another robust estimator, the corrupted Gaussian function $\theta_G(z) = -\log(\alpha\exp(-z^2)+(1-\alpha)\exp(-z^2/\beta^2)/\beta)$~\cite{hartley2003multiple}, which does not follow the definitions of the functions described before. However, since $\theta_G(\sqrt{\cdot})$ is concave, we can minimize $\theta_G(z)$ by choosing $g(z)=z^2$ and $h_b(y) = \theta_G(\sqrt{y})$. The corresponding plots for the corrupted Gaussian are provided in Fig.~\ref{fig:gh}(c).

\subsection{Hybrid strategy}

While our algorithm guarantees that the energy value decreases at each iteration, it remains prone to getting trapped in local minima (with respect to the iterative reweighting scheme). Here, we propose a hybrid optimization strategy that combines IRGC with a different minimization technique, and helps us escape from some of the local minima of the energy.

In particular, here we make use of $\alpha$-expansion~\cite{boykov2001fast} as an additional minimization technique. At each iteration of our algorithm, instead of updating $\mathbf{x}^{t} \rightarrow \mathbf{x}^{t+1}$ in one step, our hybrid strategy performs the following steps:
\begin{enumerate}
\item Update $\mathbf{x}^{t} \rightarrow \mathbf{x}'$ by minimizing the surrogate energy using the multi-label graph cut.
\item Improve the new estimate $\mathbf{x}' \rightarrow \mathbf{x}^{t+1}$ using one pass of $\alpha$-expansion with the true energy, such that $E(\mathbf{x}^{t+1}) \le E(\mathbf{x}')$.
\end{enumerate}

For non-metric pairwise potentials, for which regular $\alpha$-expansion does not apply, we truncate the non-submodular terms as suggested in~\cite{rother2005digital}. Note that this still guarantees that the energy will decrease. We found that the variety in the optimization strategy arising from this additional $\alpha$-expansion step was effective to overcome local minima. Since both the algorithms guarantee to decrease the energy $E(\mathbf{x})$ at each step, our hybrid algorithm also decreases the energy at each iteration. In our experiments, we refer to this hybrid algorithm as IRGC+expansion.

Note that other methods, such as $\alpha$-$\beta$ swap, or any algorithm that guarantees to improve the given estimate can be employed. Alternatively, one could exploit a fusion move strategy~\cite{lempitsky2010fusion} to combine the estimates obtained by the two different algorithms. However, this would come at an additional computation cost, and, we believe, goes beyond the scope of this paper.

\section{Experiments}

We evaluated our algorithm on the problems of stereo correspondence estimation and image inpainting. In those cases, the pairwise potentials typically depend on additional constant weights, and can thus be written as
\begin{equation}
\theta^b_{pq}(x_p,x_q) = \gamma_{pq}\, \phi_{pq}(|x_p - x_q|)\ ,
\end{equation}
where $\gamma_{pq}$ are the constant weights. As long as $\gamma_{pq} \geq 0$, our algorithm is unaffected by these weights, in the sense that we can simply multiply our weights $w_{pq}^t$ by these additional constant weights. Note that since the main purpose of this paper is to evaluate the performance of our algorithm on different MRF energy functions, we used different smoothing costs $\phi(\cdot)$ for different problem instances without tuning the weights $\gamma_{pq}$ for the specific smoothing costs. 

We compare our results with those of $\alpha$-expansion, $\alpha$-$\beta$ swap~\cite{boykov2001fast}, multi-label swap~\cite{veksler2012multi} and TRW-S~\cite{kolmogorov2006convergent}. For $\alpha$-expansion, we used the maxflow algorithm when the pairwise potentials were \textit{metric}, and the QPBOP algorithm~\cite{boros2002pseudo,rother2007optimizing} (denoted as $\alpha$-expansionQ)  otherwise. In the latter case, if a node in the binary problem is unlabeled then the previous label is retained. For our comparison, we used the publicly available implementation of $\alpha$-expansion, $\alpha$-$\beta$ swap, QPBO and TRW-S, and implemented the multi-label swap algorithm as described in~\cite{veksler2012multi}.

All the algorithms were initialized by assigning the label $0$ to all the nodes. For multi-label swap the parameter $t$ was fixed to 2 in all our experiments (see~\cite{veksler2012multi} for details). The energy values presented in the following sections were obtained at convergence of the different algorithms, except for TRW-S which we ran for 100 iterations and chose the best energy value. All our experiments were conducted on a 3.4GHz i7-4770 CPU with 16 GB RAM and no effort was taken to utilize the multiple cores of the processor. 

\vspace{-0.3cm}
\paragraph{Stereo:} \mbox{}\\
Given a pair of rectified images (one left and one right), stereo correspondence estimation aims to find the disparity map, which specifies the horizontal displacement of each pixel between the two images with respect to the left image. For this task, we employed six instances from the Middlebury dataset~\cite{scharstein2002taxonomy, scharstein2003high}: Teddy, Map, Sawtooth, Venus, Cones and Tsukuba. For Tsukuba and Venus, we used the unary potentials of~\cite{szeliski2008comparative}, and for the other cases, those of~\cite{birchfield1998pixel}. The pairwise potentials are summarized in Table \ref{tab:stereo}. Note that we do not explicitly model occlusions, which should be handled by our robust potentials.

\begin{table}[t]
\begin{center}
\begin{tabular}{c|c|c|c}
Problem & $\gamma_{pq}$ & $\phi(\cdot)$ & $\lambda$ \\
\hline
Teddy & $\left\{\begin{array}{ll} 30 & \mbox{if $\nabla_{pq} \le 10$} \\ 10 & \mbox{otherwise} \end{array} \right.$ & \multirow{2}{*}{\parbox{1.5cm}{Truncated linear}} & 8 \\
Map & 4 & & 6 \\
\hline
Sawtooth & 20 & \multirow{2}{*}{\parbox{1.5cm}{Truncated quadratic}} & 3 \\
Venus & 50 &  & 3 \\
\hline
Cones & 10 & \multirow{3}{*}{\parbox{1.5cm}{Cauchy function}} & 8 \\
Tsukuba & $\left\{\begin{array}{ll} 40 & \mbox{if $\nabla_{pq} \le 8$} \\ 20 & \mbox{otherwise} \end{array} \right.$ & & 2 \\ 
\end{tabular}
\end{center}
\vspace{-0.4cm}
\caption{\em Pairwise potential $\theta^b_{pq}(x_p, x_q) = \gamma_{pq}\,\phi(|x_p - x_q|)$ used for the stereo problems. Here $\phi(z)$ is convex if $z \le \lambda$ and concave otherwise, and $\nabla_{pq}$ denotes the absolute intensity difference between the pixels $p$ and $q$ in the left image.}
\label{tab:stereo}
\end{table}

The final energies and execution times corresponding to the stereo problems are summarized in Table~\ref{tab:energys}. The disparity maps found using our IRGC+expansion algorithm and energy vs time plots of the algorithms for some of the problems are shown in Fig.~\ref{fig:stereo} and Fig.~\ref{fig:evst}(a-c), respectively. Note that IRGC+expansion yields the lowest energy in most cases, and when it does not, an energy that is virtually the same as the lowest one.

\begin{table*}[t]
\begin{center}
\begin{tabular}{>{\centering\arraybackslash}m{2.4cm}|>{\centering\arraybackslash}m{1cm}>{\centering\arraybackslash}m{0.5cm}|>{\centering\arraybackslash}m{1cm}>{\centering\arraybackslash}m{0.5cm}|>{\centering\arraybackslash}m{1cm}>{\centering\arraybackslash}m{0.5cm}|>{\centering\arraybackslash}m{1cm}>{\centering\arraybackslash}m{0.5cm}|>{\centering\arraybackslash}m{1cm}>{\centering\arraybackslash}m{0.5cm}|>{\centering\arraybackslash}m{1cm}>{\centering\arraybackslash}m{0.5cm}}
\multirow{2}{*}{Algorithm} & \multicolumn{2}{c}{Teddy} & \multicolumn{2}{c|}{Map} & \multicolumn{2}{c}{Sawtooth} & \multicolumn{2}{c|}{Venus} & \multicolumn{2}{c}{Cones} & \multicolumn{2}{c}{Tsukuba} \\
	& E[$10^3$] & T[s] & E[$10^3$] & T[s] & E[$10^3$] & T[s] & E[$10^3$] & T[s] & E[$10^3$] & T[s] & E[$10^3$] & T[s]\\
\hline
$\alpha$-$\beta$ swap&2708.1&    87& 149.5&     5&1079.5&    18&3219.7&    21&4489.9&   495& 409.1&    14\\
$\alpha$-expansion(Q)&2664.6&    48& 144.4&     4&1067.5&    26&3201.5&    28&2480.7&   453& 403.3&    15\\
Multi-label swap&5502.3&   235& 470.6&    12&1660.6&   102&5740.1&   162&-&-&-&-\\
TRW-S&2652.7&   362& \textbf{143.0}&    41&1038.8&    55&3098.6&    55&2304.8&   337& \textbf{395.8}&    20\\
IRGC&2687.8&   104& 144.0&    14&1042.1&   129&3081.4&    70&\textbf{2301.4}&   655& 397.3&    30\\
IRGC+expansion&\textbf{2650.3}&    77& 143.2&     8&\textbf{1034.9}&    44&\textbf{3078.8}&    39&\textbf{2301.4}&   336& 396.1&    23\\
\end{tabular}
\end{center}
\vspace{-0.4cm}
\caption{\em Comparison of the minimum energies (E) and execution times (T) for stereo problems. IRGC+expansion found the lowest energy for most of the problems and virtually the same energy as TRW-S otherwise. For the truncated linear prior (Teddy and Map) IRGC+expansion was 5 times faster than TRW-S. For the truncated quadratic and the Cauchy prior IRGC outperformed all other graph-cut-based algorithms and found a lower energy than TRW-S for Venus and Cones.}
\label{tab:energys}
\end{table*}

\begin{figure}[t]
\begin{center}

\begin{subfigure}{.16\textwidth}
	\includegraphics[height=2.15cm]{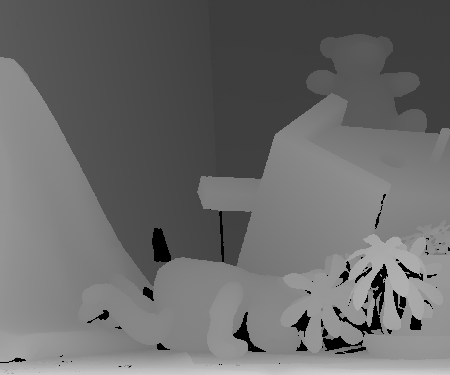}\\[0.5em]
	\includegraphics[height=2.15cm]{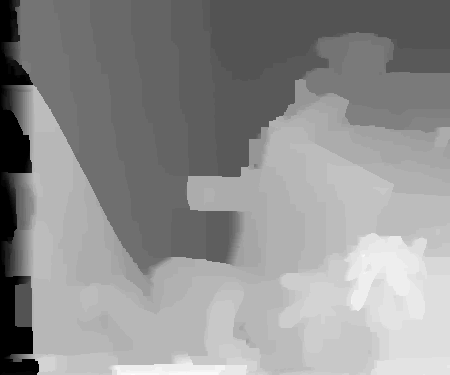}
	\caption{Teddy,\\Tr. linear}
\end{subfigure}%
\begin{subfigure}{.152\textwidth}
	\includegraphics[height=2.15cm]{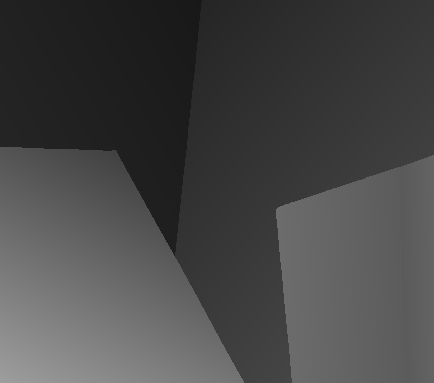}\\[0.5em]
	\includegraphics[height=2.15cm]{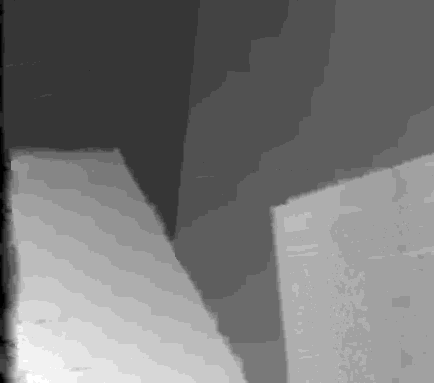}
	\caption{Venus,\\Tr. quad.}
\end{subfigure}%
\begin{subfigure}{.18\textwidth}
	\includegraphics[height=2.15cm]{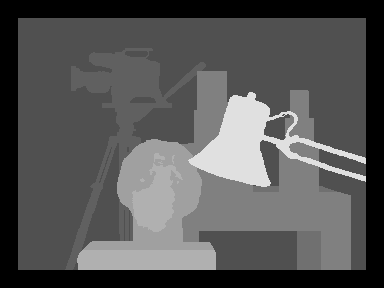}\\[0.5em]
	\includegraphics[height=2.15cm]{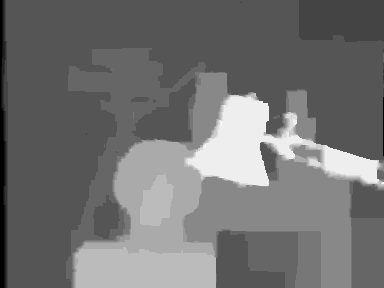}
	\caption{Tsukuba,\\Cauchy}
\end{subfigure}%
\end{center}
\vspace{-0.4cm}
   \caption{\em Disparity maps obtained with our IRGC+expansion algorithm. The corresponding ground-truth is shown above each disparity map.}
\label{fig:stereo}
\end{figure}

To illustrate the fact that our algorithm can also exploit priors that are not first convex and then concave, we employed a corrupted Gaussian pairwise potential (with parameters $\alpha=0.75$ and $\beta=50$) on the Tsukuba stereo pair. The results shown in Table~\ref{tab:cgau} demonstrate that our IRGC+expansion algorithm also outperforms the baselines in this case.

\begin{table}[t]
\begin{center}
\begin{tabular}{c|cc}
\multirow{2}{*}{Algorithm} & \multicolumn{2}{c}{Tsukuba} \\
	& E[$10^3$] & T[s]\\
\hline
$\alpha$-$\beta$ swap& 568.3&    18\\
$\alpha$-expansionQ& 555.2&    15\\
TRW-S& 550.0&    22\\
IRGC& 614.3&    72\\
IRGC+expansion& \textbf{549.3}&    53\\
\end{tabular}
\end{center}
\vspace{-0.4cm}
\caption{\em Comparison of the minimum energies (E) and execution times (T) on Tsukuba with a corrupted Gaussian prior. While IRGC was trapped in a local minimum, IRGC+expansion found the lowest energy.}
\label{tab:cgau}
\end{table}

\begin{figure*}[t]
\begin{center}
\begin{subfigure}{.25\textwidth}
	\includegraphics[width=0.98\linewidth]{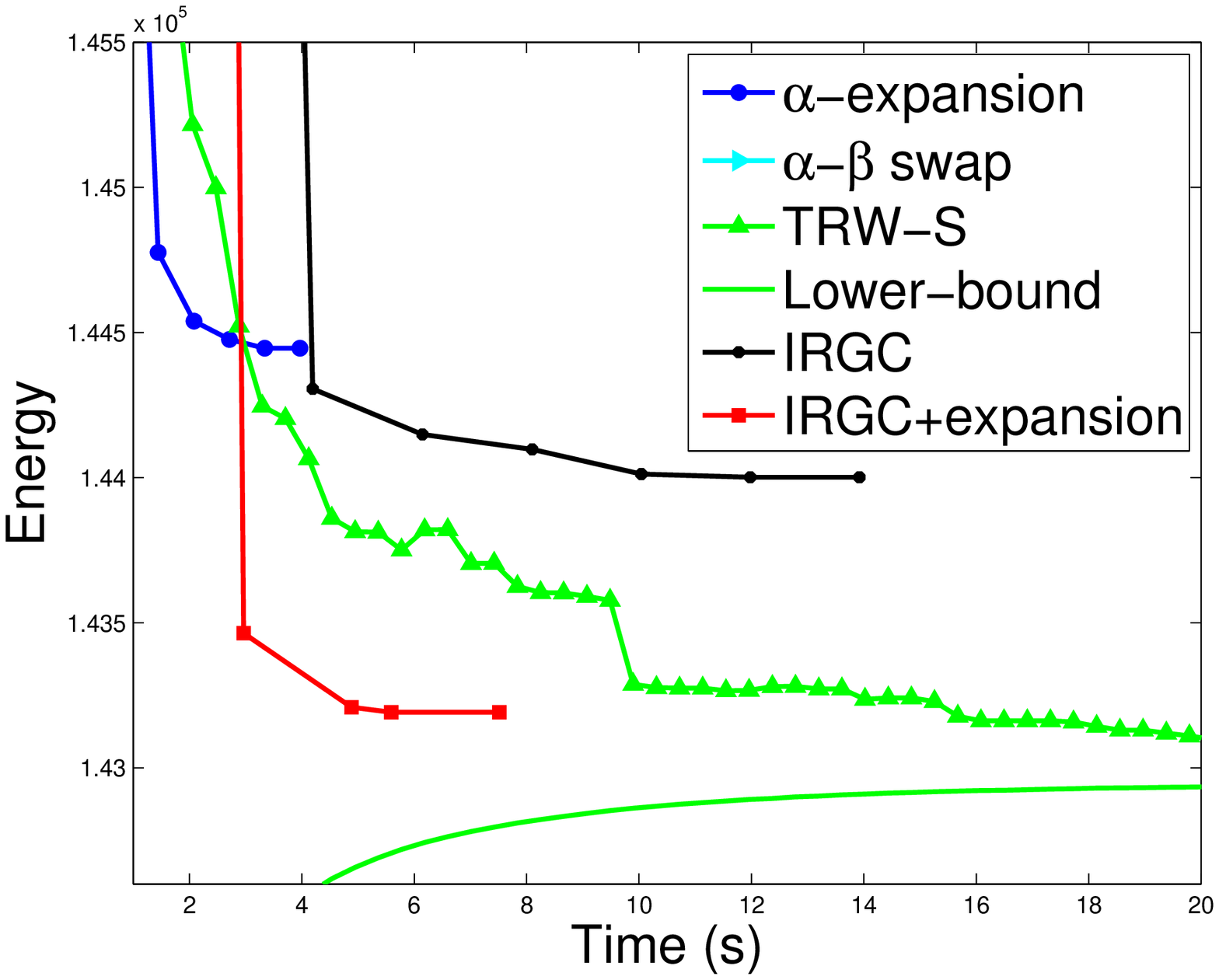}
	\caption{Map,\\Trunc. linear}
\end{subfigure}%
\begin{subfigure}{.25\textwidth}
	\includegraphics[width=0.98\linewidth]{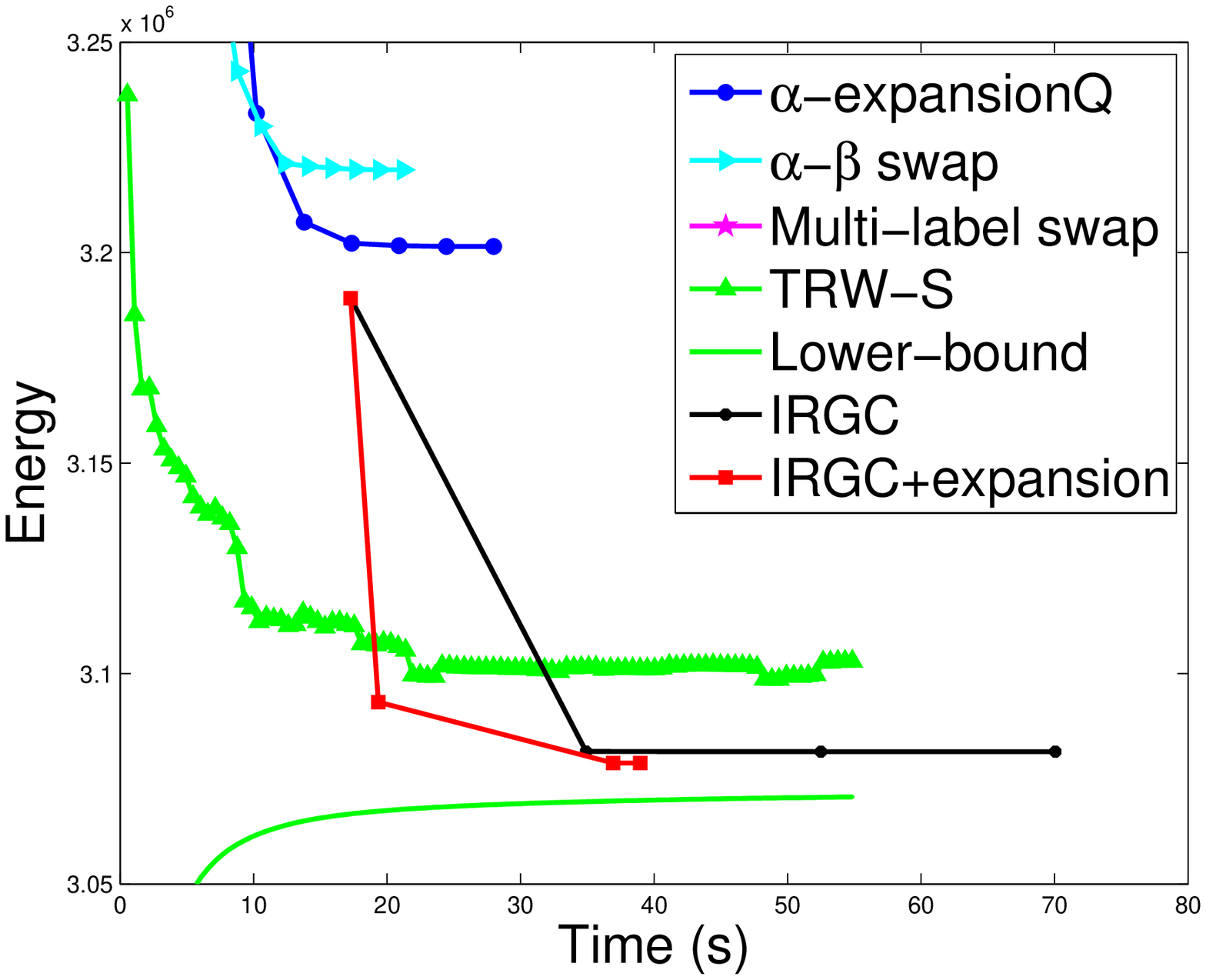}
	\caption{Venus,\\Trunc. quadratic}
\end{subfigure}%
\begin{subfigure}{.25\textwidth}
	\includegraphics[width=0.98\linewidth]{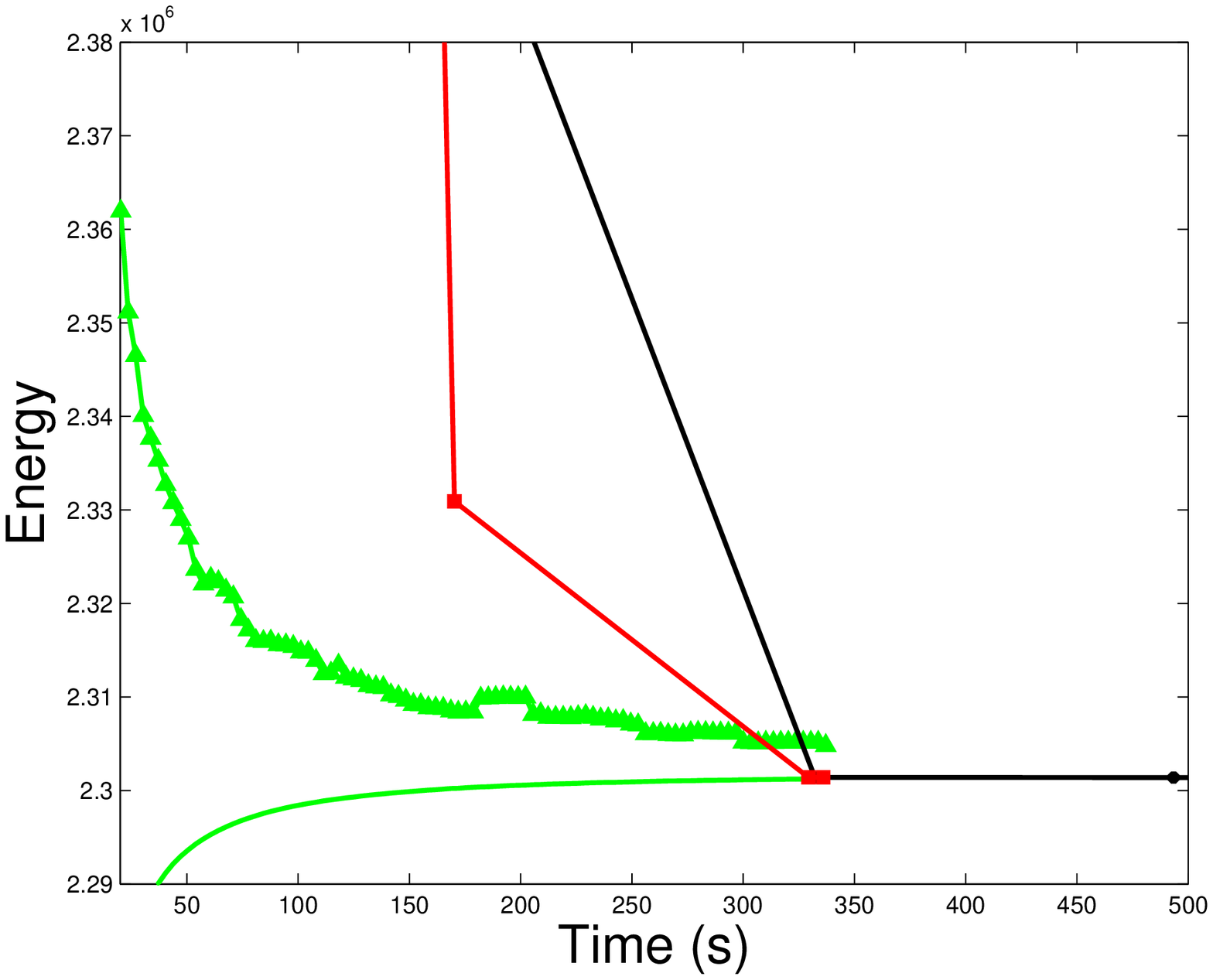}
	\caption{Cones,\\Cauchy function}
\end{subfigure}%
\begin{subfigure}{.25\textwidth}
	\includegraphics[width=0.98\linewidth]{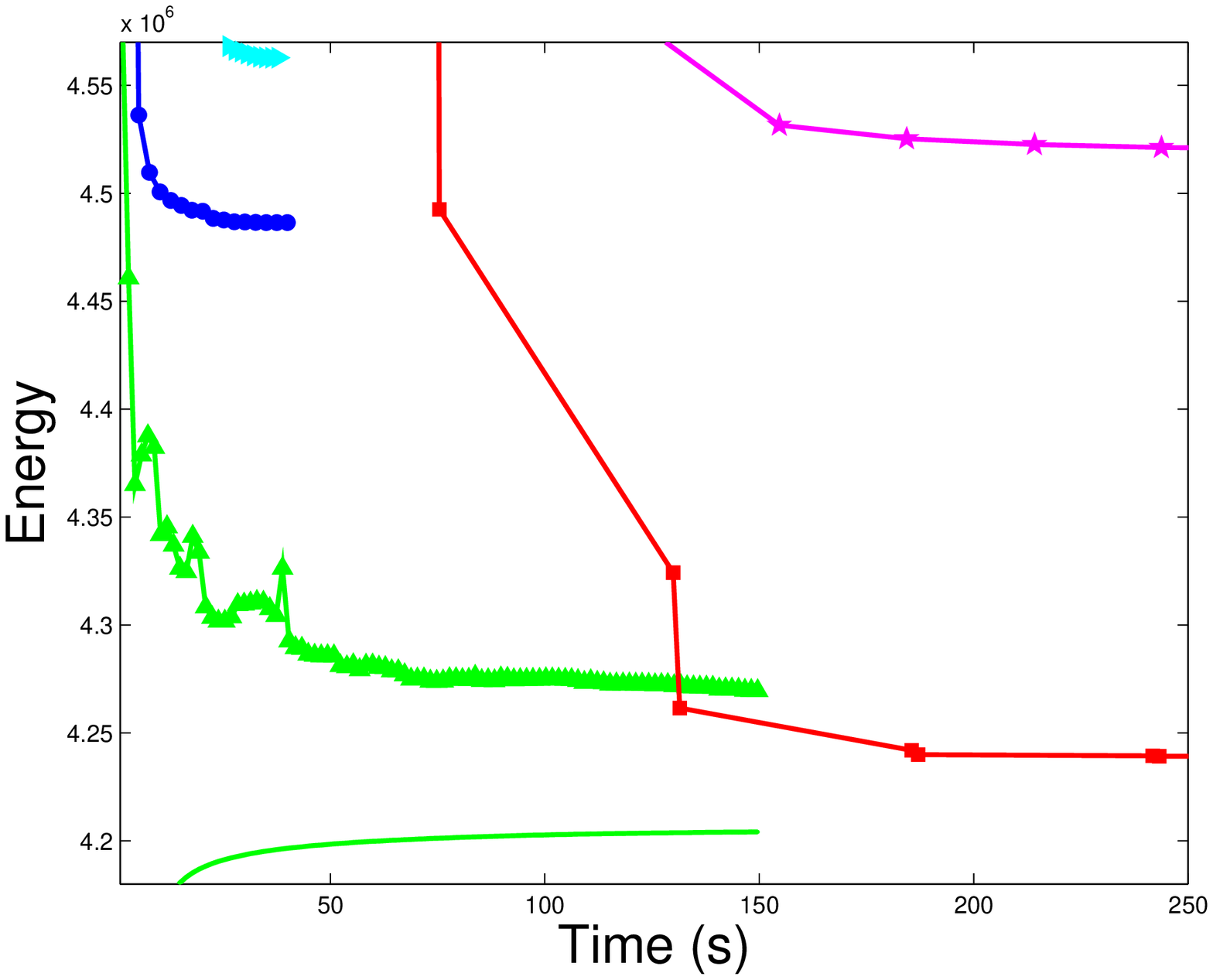}
	\caption{Penguin,\\Trunc. quadratic}
	\label{sfig:evst_pen}
\end{subfigure}
\end{center}
\vspace{-0.4cm}
 \caption{\em Energy vs time (seconds) plots for the algorithms for \textbf{(a) - (c)} some stereo problems and  \textbf{(d)} an inpainting problem. The plots are zoomed-in to show the finer details. IRGC+expansion algorithm outperformed all the other algorithms and found the lowest energy within 2--5 iterations. IRGC found a lower energy than $\alpha$-expansion for Map.}
\label{fig:evst}
\end{figure*}

\vspace{-0.3cm}
\paragraph{Inpainting:} \mbox{}\\
Image inpainting tackles the problem of filling in the missing pixel values of an image, while simultaneously denoising the observed pixel values. In our experiments, we used the Penguin and House images employed in~\cite{szeliski2008comparative}. Due to memory limitation, however, we down-sampled the labels from 256 to 128 for Penguin and from 256 to 64 for House. We used the same unary potential as in~\cite{szeliski2008comparative}, i.e., $f_p(x_p) = (I_p - x_p)^2$ if the intensity $I_p$ is observed, and $f_p(x_p) = 0$ otherwise. As pairwise potentials, we employed the truncated quadratic cost $\theta^b_{pq}(x_p, x_q) = \gamma_{pq}\,\min\left((x_p-x_q)^2, \lambda^2\right)$. For Penguin, $\gamma_{pq}=20$ and $\lambda=10$, and for House $\gamma_{pq}=5$ and $\lambda=15$. The final energies and execution times are summarized in Table \ref{tab:inp}, with the inpainted images shown in Fig. \ref{fig:pen}. Furthermore, in Fig.~\ref{sfig:evst_pen}, we show the energy as a function of time for the different algorithms. Note that for both images, our IRGC+expansion reaches the lowest energy.

\begin{table}[t]
\begin{center}
\begin{tabular}{c|cc|cc}
\multirow{2}{*}{Algorithm} & \multicolumn{2}{c}{House} & \multicolumn{2}{c}{Penguin} \\
	& E[$10^3$] & T[s] & E[$10^3$] & T[s]\\
\hline
$\alpha$-$\beta$ swap&2488.9&    47&4562.8&    38\\
$\alpha$-expansionQ&2510.0&   687&4486.4&    40\\
Multi-label swap&\textbf{2399.9}&  1442&4520.6&   392\\
TRW-S&2400.1&   130&4269.5&   150\\
IRGC&\textbf{2399.9}&   200&4696.9&  1636\\
IRGC+expansion&\textbf{2399.9}&   140&\textbf{4238.9}&   355\\
\end{tabular}
\end{center}
\vspace{-0.4cm}
\caption{\em Comparison of minimum energies (E) and execution times (T) for the truncated quadratic prior on two inpainting problems. Our IRGC+expansion algorithm outperforms all the other methods in terms of minimum energy. On House, IRGC and multi-label swap also achieve the same lowest energy, but the latter was roughly 10 times slower than IRGC+expansion. On Penguin, while IRGC was trapped in a local minimum, IRGC+expansion was able to find the lowest energy.}
\label{tab:inp}
\end{table}

\begin{figure}[t]
\begin{center}
\begin{subfigure}{.12\textwidth}
	\includegraphics[width=0.9\linewidth]{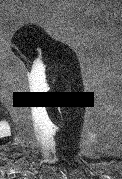}
	\caption{Input}
\end{subfigure}%
\begin{subfigure}{.12\textwidth}
	\includegraphics[width=0.9\linewidth]{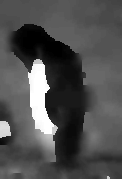}
	\caption{$\alpha$-$\beta$ swap}
\end{subfigure}%
\begin{subfigure}{.12\textwidth}
	\includegraphics[width=0.9\linewidth]{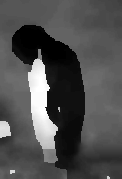}
	\caption{$\alpha$-expQ}
\end{subfigure}%
\begin{subfigure}{.12\textwidth}
	\includegraphics[width=0.9\linewidth]{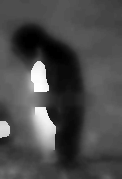}
	\caption{Multi-swap}
\end{subfigure}
\begin{subfigure}{.12\textwidth}
	\includegraphics[width=0.9\linewidth]{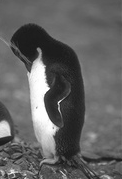}
	\caption{Ground-truth}
\end{subfigure}%
\begin{subfigure}{.12\textwidth}
	\includegraphics[width=0.9\linewidth]{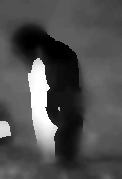}
	\caption{TRW-S}
\end{subfigure}%
\begin{subfigure}{.12\textwidth}
	\includegraphics[width=0.9\linewidth]{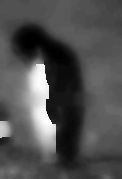}
	\caption{IRGC}
\end{subfigure}%
\begin{subfigure}{.12\textwidth}
	\includegraphics[width=0.9\linewidth]{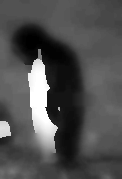}
	\caption{IRGC+exp.}
\end{subfigure}
\vspace{-0.4cm}
\end{center}
   \caption{\em Inpainted images for Penguin. Note that multi-label swap was not able to completely inpaint the missing pixels. IRGC+expansion produced smoother results than QPBOP based $\alpha$-expansion (see the bottom of the penguin) while preserving the finer details compared to TRW-S (see the neck of the penguin). See Table~\ref{tab:inp} for the energy values.}
\label{fig:pen}
\end{figure}

\vspace{-0.3cm}
\paragraph{Summary:} \mbox{}\\
To evaluate the quality of the minimum energies, we followed the strategy of~\cite{szeliski2008comparative}, which makes use of the lower bound found by TRW-S. This quality measure is computed as
\begin{equation}
Q = \frac{E - E_b}{E_b}\, 100 \%\ ,
\label{eqn:acc}
\end{equation}
where $E_b$ is the largest lower bound of TRW-S and $E$ is the minimum energy found by an algorithm. In Table~\ref{tab:acc}, we compare the resulting values of our algorithms with TRW-S, which, from the previous results was found to be the best-performing baseline. Note that our IRGC+expansion algorithm yields the best quality measure on average.

\begin{table}[t]
\begin{center}
\begin{tabular}{c|c|c|c}
Problem & TRW-S & IRGC & IRGC+exp. \\
\hline
Teddy&0.3040\%&1.6289\%&\textbf{0.2102\%}\\
Map&\textbf{0.0511\%}&0.7387\%&0.1728\%\\
Sawtooth&0.6452\%&0.9621\%&\textbf{0.2616\%}\\
Venus&0.9096\%&0.3498\%&\textbf{0.2625\%}\\
Cones&0.1551\%&\textbf{0.0065\%}&0.0074\%\\
Tsukuba&\textbf{0.0910\%}&0.4678\%&0.1679\%\\
Tsu. cor. Gau.&0.3926\%&12.1226\%&\textbf{0.2736\%}\\
Penguin&1.5556\%&11.7218\%&\textbf{0.8259\%}\\
House&0.0154\%&\textbf{0.0058\%}&\textbf{0.0058\%}\\
\hline
Average&0.4577\%&3.1116\%&\textbf{0.2431\%}\\
\end{tabular}
\end{center}
\vspace{-0.4cm}
\caption{\em Quality of the minimum energies according to Eq.~\ref{eqn:acc}. Our IRGC+expansion clearly yields the best quality energies on average.}
\label{tab:acc}
\end{table}

\section{Conclusion}
We have introduced an Iteratively Reweighted Graph Cut algorithm that can minimize multi-label MRF energies with arbitrary data terms and non-convex priors. We have shown that, while the basic algorithm sometimes gets trapped in local minima, our hybrid version consistently outperforms (or performs virtually as well as) state-of-the-art MRF energy minimization techniques. Due to our use of the multi-label graph cut at each iteration of our algorithm, memory requirement is the current bottleneck of our approach. Note, however, that this can be overcome by replacing the multi-label graph cut with the convex formulation of $\alpha$-expansion~\cite{carr2009solving}, or the approach of~\cite{pock2008convex}. In the latter case, this would further extend the applicability of IRGC to continuous label spaces. Finally, and as discussed in the paper, IRGC really is a special case of an iteratively reweighted approach to MRF, and even continuous, energy minimization. In the future, we therefore intend to study the applicability of such an approach to other types of problems.

{\small
\bibliographystyle{ieee}
\bibliography{irls}
}

\end{document}